\documentclass[twoside,leqno,twocolumn]{article}

\usepackage[letterpaper]{geometry}
\usepackage{makecell}
\usepackage[dvipsnames,table,xcdraw]{xcolor}
\usepackage{siamproceedings}

\usepackage[T1]{fontenc}
\usepackage{amsfonts}
\usepackage{url}
\usepackage{graphicx}
\usepackage{epstopdf}
\usepackage{enumitem}
\usepackage{algorithmic}
\usepackage{booktabs}
\ifpdf
  \DeclareGraphicsExtensions{.eps,.pdf,.png,.jpg}
\else
  \DeclareGraphicsExtensions{.eps}
\fi

\newsiamremark{remark}{Remark}
\newsiamremark{hypothesis}{Hypothesis}
\crefname{hypothesis}{Hypothesis}{Hypotheses}
\newsiamthm{claim}{Claim}

\usepackage{amsopn}

\definecolor{kaamilcolor}{RGB}{128, 0, 128}   %
\definecolor{audreycolor}{RGB}{255, 140, 0}      %

\begin{document}

\newcommand\relatedversion{}

\title{\Large MINT: Tensor Decomposition on Stacked Recurrence Matrices for Time Series Data Mining\relatedversion}

\author{Kaamil Kaka$^{*,2}$\thanks{Equal contribution.}\thanks{University of Texas at Dallas, Correspondence: \email{kmk230008@utdallas.edu}}\thanks{$^2$Neuralix AI}
\and Audrey Der$^{*}$\thanks{Work done while at Neuralix AI. Current contact: \email{audrey\_der@intuit.com}}
\and Evangelos E. Papalexakis$^3$
\and Zachary Zimmerman$^3$\thanks{$^3$University of California, Riverside, Department of Computer Science}
\and Vikram Jayaram$^2$}

\date{}

\maketitle

\fancyfoot[R]{\scriptsize{Copyright \textcopyright\ 2026 by SIAM\\
Unauthorized reproduction of this article is prohibited}}

\begin{abstract} Recurrence plots are a time series data mining primitive applied to a variety of domains (e.g. star light curves, sound waveforms, CCT telemetry). This work proposes tensorized self-similarity matrices as a primitive for univariate time series datasets ($N\times n$) of $N$ time series of length $n$ with a subsequence window of length $m$, and whose tensor-based nature is naturally extensible to multivariate datasets. The proposed method to compute this primitive computes dot plots of size $N \times (n-m+1) \times (n-m+ 1)$ from these datasets, where the subsequent tensor is mined using tensor decomposition methods to mine for co-clustered patterns. We demonstrate our results in mass rapid transit, electricity demand, wind turbine, and car traffic data, finding the MINT pipeline effectively co-clusters cross-sensor patterns in highly regular datasets containing motifs at regular intervals.
\end{abstract}

\section{Introduction}\label{sec:introduction}

When the same discordant phenomenon appears across dozens of sensors simultaneously, do we have $N$ anomalies or one? 
Which sensors share not just the time stamp of an event, but its \textit{shape}? 
Which subsequences of a sensor's history participate in a cross-series pattern?
These questions are not \textit{prediction} problems, but \textit{structure discovery} problems, requiring interpretable decomposition, not learned representations.

The Matrix Profile~\cite{MatrixProfileI} is a primitive for motif and discord mining, compressing the full self-similarity structure of a time series into an efficient nearest-neighbor distance vector.
This compression is lossy; the Mplot~\cite{Shahcheraghi}, which retains the full pairwise self-similarity distances between all subsequences, contains richer structure than the Matrix Profile alone.
Prior work demonstrates that Mplots support rich visual analysis of individual time series, enabling practitioners to identify motifs, discords, and phase relationships by eye.
However, Mplots alone are not sufficient as an analytical method.
Visual inspection does not scale with dataset size, and with tens or hundreds of series, there is no accompanying method for extracting cross-series structure or summarizing temporal alignment across the collection.
By retaining the full Mplot for each series and stacking them into a tensor, we consider the entire dataset simultaneously.
The resulting tensor exhibits low-rank structure recoverable by interpretable tensor decomposition methods (e.g. CPD, Tucker, etc.), corresponding to co-clustering information that has no natural representation in per-series analysis.

\begin{figure}[h]
    \centering
    \vspace{-10px}
    \includegraphics[width=0.99\linewidth]{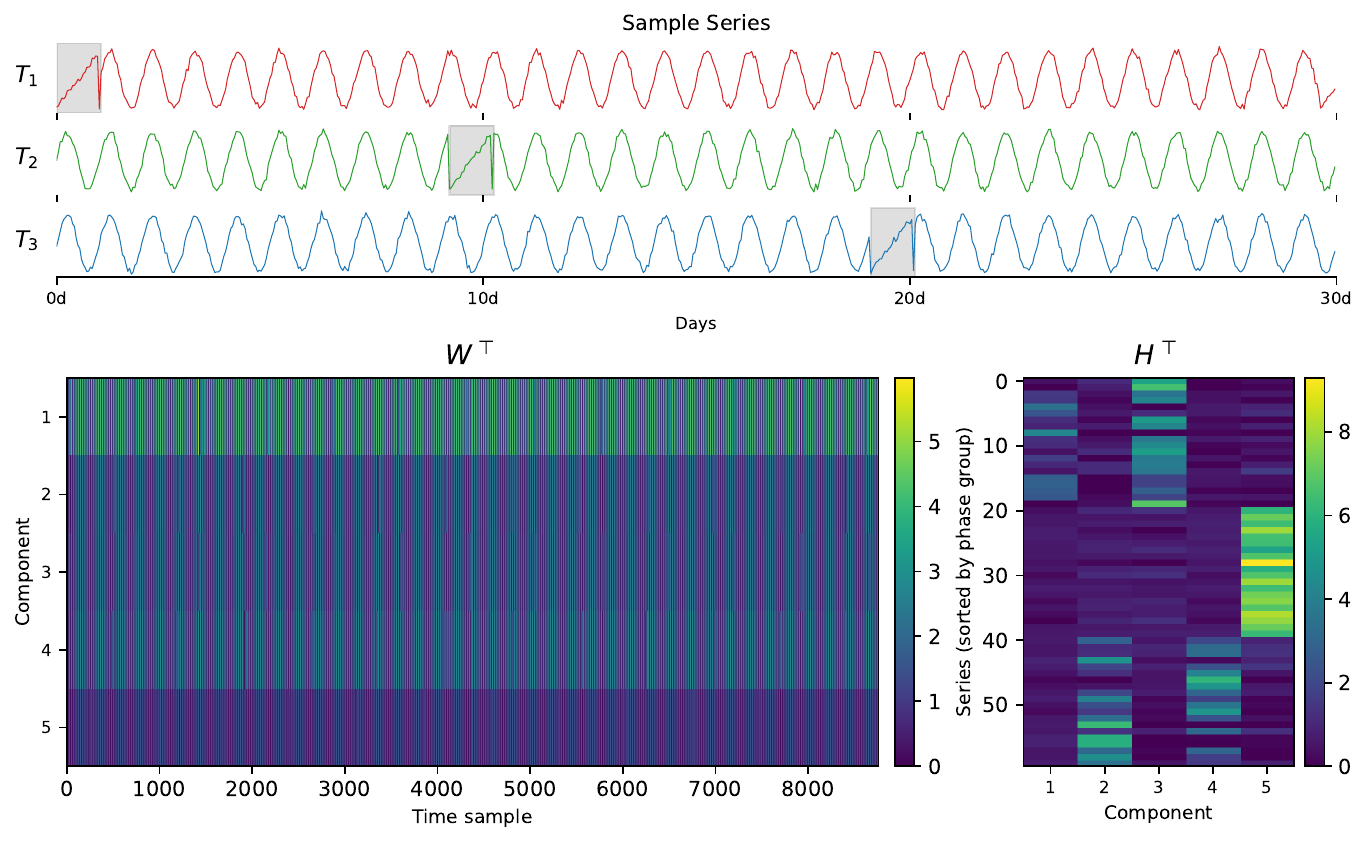}
    \vspace{-10px}
    \caption{NMF applied to the  synthetic dataset.
         \textit{Top)} Representative series from each phase group over the first 30 days, gray shading marks the first sawtooth. \textit{Bottom Left)} The temporal factor $W^{T}$ (time $\times$ components). \textit{Bottom Right)} The series-mode factor $H^T$, columns sorted by ground-truth phase group, revealing a block structure consistent with $T_1$, $T_2$, $T_3$.} 
    \label{fig:NMF}
\end{figure}
To illustrate what is lost when the self-similarity representation is abandoned in favor of scalable analytical methods, consider applying NMF directly to a raw data matrix of 60 synthetic time series, each a sinusoidal signal with additive noise and periodic sawtooth anomalies (Figure~\ref{fig:NMF}).
The series belong to three latent groups, $T_1$, $T_2$, and $T_3$, differing only in the phase offset at which anomalies begin: 0, 10, and 20 days respectively.
This is a controlled setting where the ground truth is known, making it possible to evaluate what a method recovers and what it does not.
NMF recovers the sensor-mode grouping structure in $H^T$: when series are sorted by ground-truth group membership, a clear block pattern emerges.
However, $W$, operating in raw time space, does not surface when each group's characteristic pattern occurs: a 10-day phase difference spans only 240 samples out of 8,761, indistinguishable at figure scale without additional post-hoc analysis.

We now construct and decompose a tensorized self-similarity matrix on the same dataset. Figure~\ref{fig:MINT_works} shows the resulting factor matrices $A$, $B$, and $C$.
The structure is legible and internally consistent across all three modes.
In $A$, the three phase groups emerge as distinct blocks when sorted by ground-truth membership.
This sensor-mode picture is confirmed by the temporal factors: in $B$ and $C$, the bright bands align with the analytically computed phase-offset positions for $T_1$, $T_2$, and $T_3$; these dashed lines are drawn after the fact from ground truth as a validation step, not as inputs to the decomposition.
All three matrices independently arrive at the same component-to-phase assignment, providing cross-modal validation that the decomposition is recovering genuine periodic structure rather than an artifact of the representation.

\begin{figure}[h]
    \centering
    \includegraphics[width=0.99\linewidth]{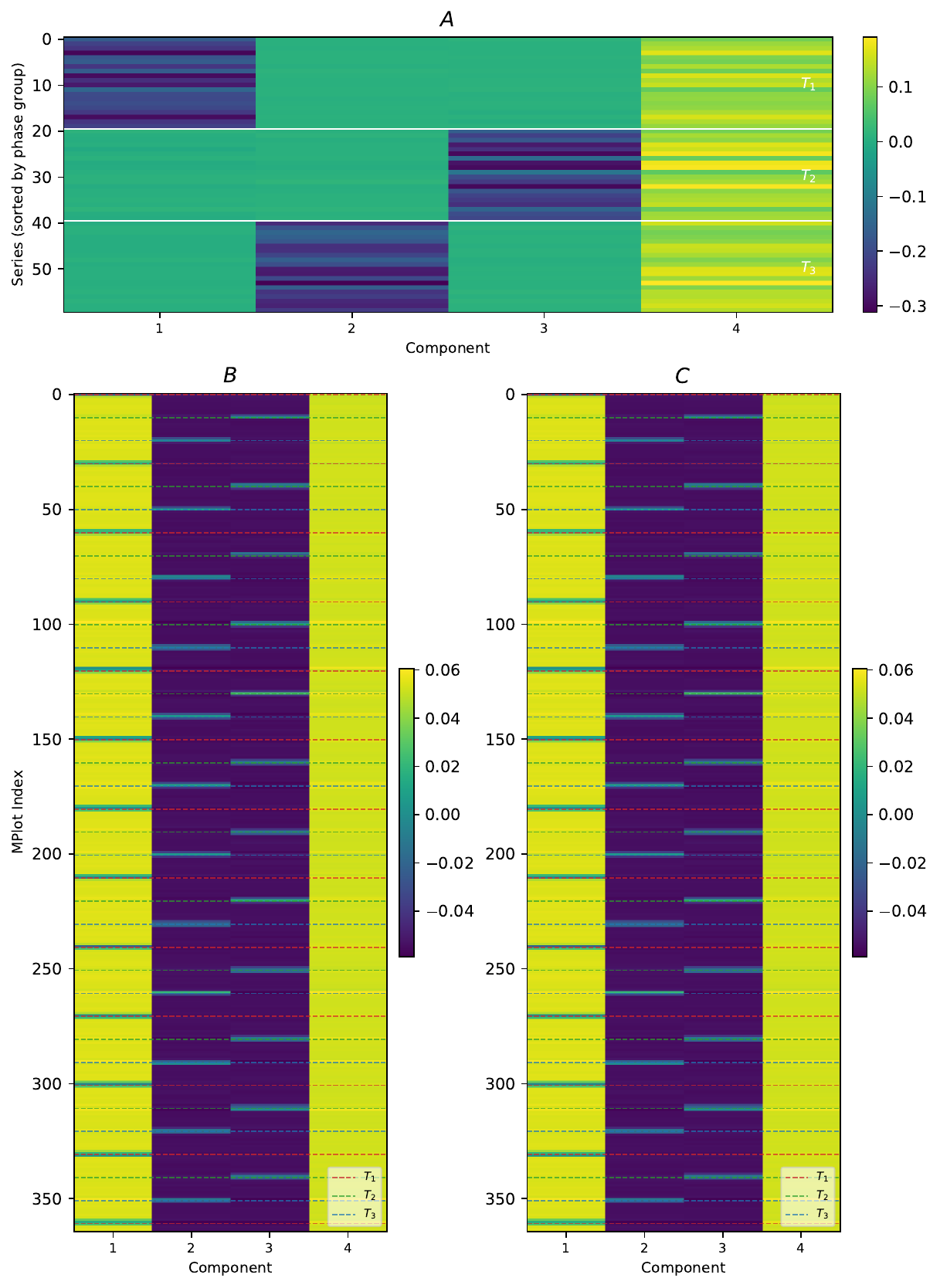}
    \vspace{-20px}
    \caption{MINT applied to the same dataset. \textit{Top)} The sensor-mode factor $A$, rows sorted by phase group, recovering the same three-group block structure as NMF's $H$. \textit{Bottom)} The temporal factors $B$ and $C$, operating in Mplot self-similarity space. Dashed lines mark analytically computed phase-offset positions for $T_1$ (red), $T_2$ (green), and $T_3$ (blue). Bright bands in $B$ and $C$ align with these positions, directly encoding when each group's sawtooth pattern recurs.}
    \label{fig:MINT_works}
\end{figure}

The contribution of this work is not a new decomposition method; rather, it proposes a new object to decompose, the tensor of self-similarity matrices.
We introduce tensorized self-similarity matrices as a new time series data mining primitive, and propose MINT as a first method instantiating this primitive under Euclidean distance. 
We show this representation concentrates cross-series temporal structure in a form that standard tensor decomposition can extract, enabling analysis that was previously limited to what a practitioner could see by eye in individual Mplots.

\section{Motivation and Related Work}\label{sec:related_work}

\paragraph{Recurrence plots and Mplots.}
The essential insight of encoding a time series' self-similarity data in a matrix has a rich history of iterative development and application in many fields. In dynamical systems analysis, \textit{recurrence plots} (RPs) utilize pairwise state similarities to succinctly characterize a nonlinear time series' dynamics \cite{goswami2019brief}. Since Eckmann et al. \cite{Eckmann_1987} introduced the concept in 1987, extensions such as the cross recurrence plot (CRP) \cite{marwan2002} and joint recurrence plot (JRP) \cite{Romano2004MultivariateRP} have generalized its scope to bivariate and multivariate time series, respectively. RP-based methods are tailored to analyzing and comparing recurrence behaviors among time series and have been used effectively in time series classification and anomaly detection \cite{TSCDotPlots,chen2020cnn_rp_anomaly}. Nevertheless, they are sensitive to thresholding and choice of embedding, and thus do not alone provide a single, continuous representation of both intra-series and inter-series self-similarity.

Recently, Kang et al. proposed the RP-GCN methodology \cite{kang2025rpgcn}, which exploits the topological recurrence properties of the RP for time series classification. Given a multivariate time series, an RP is computed for each univariate time series. The RPs are then stacked along a new axis to produce a $3$-tensor, whose graph representation is finally processed by a graph convolutional network (GCN). While RP-GCN captures the recurrence structure of a time series dataset, it only uses it as an intermediate representation to produce a global classification result, not as a mineable object for structure discovery. It does not perform subsequence-granularity co-clustering, nor can its final result be easily mined to provide such information.

Mplots \cite{Shahcheraghi} leverage the Matrix Profile as a recurrence plot, finding patterns in star light curves, animal sound waveforms, CCT telemetry in mice, and pulsus paradoxus (an exaggerated drop in blood pressure). A more formal definition is given in \cref{sec:methodology}, but an Mplot is roughly a matrix whose entries are the Euclidean distances between every pair of subsequences. Intuitively, Mplots measure the self-similarity of a time series in Euclidean Distance sliding window fashion; they cluster correlated subsequences in a single series, thus capturing key structural information. Moreover, they are interpretable, enabling visual observation of time series trends, motifs, and discords. For our objectives, the Mplot is thus the richest object in this line of work, but it has never been treated as a mineable \textit{dataset-level} (rather than \textit{series-level}) primitive, either for univariate or multivariate datasets.

\paragraph{Tensor methods for time series.} Tensor algebra algorithms historically developed in tandem with applications in psychometrics, chemometrics, and signal processing, but in the past two decades they have borne substantial fruit in machine learning and data mining \cite{sidiropoulos2017tensor}. Prior work on tensor methods for time series largely treats tensors as the \textit{observed} data structure (e.g., an $n$-tensor time series whose entries may be stacked to form a $(n+1)$-tensor), modeling their dynamics for forecasting or dimensionality reduction through tensor decomposition (e.g. CP or Tucker) or tensor autoregressive models \cite{bolivar2026tensor, changMatrixTS}. Cross-series self-similarity structure is absent from these formulations entirely, hindering their ability to capture subtle co-clustering information at the subsequence level.
\paragraph{Time series clustering.} One of the fundamental tasks in time series analysis is \textit{clustering}, partitioning data samples into groups in such a way that in-group similarity is maximized and inter-group similarity is minimized. Individual clusters share domain-relevant features and thus constitute potential \textit{patterns} revealing the latent structure of the time series~\cite{paparrizos2024bridging}.

Whole time-series clustering (WSC) learns clusters of individual series within a collection of distinct time series. Paparrizos et al.~\cite{paparrizos2024bridging} propose a comprehensive taxonomy of this space incorporating both classical and deep learning strategies, classifying clustering techniques into (i) distance-based, (ii) distribution-based, (iii) subsequence-based, and (iv) representation-learning-based methods. Subsequence-based WSC methods in particular extract representative sequences to differentiate among time series clusters. While they may leverage similarity data at the subsequence level—for example, the Matrix Profile in MPNCMI clustering~\cite{MPNCMI}—they ultimately associate only a single label to each time series. Moreover, they do not natively detect time-aligned correlations between subsequences in different series.

Subsequence-based WSC is distinguished from subsequence clustering, which clusters time series subsequences obtained by sliding windows rather than the individual time series themselves~\cite{paparrizos2024bridging}. It is important to distinguish subsequence clustering from subsequence \textit{co}-clustering. Keogh and Lin~\cite{subsequenceClustering} demonstrated that, for virtually all datasets, the results obtained by subsequence clustering are essentially random. This ``meaninglessness'' is an inextricable result of the way subsequences are sampled by overlapping sliding windows, rather than features of the input data itself. Subsequence \textit{co}-clustering, on the other hand, clusters patterns along several modes (in both time and sensor space, in our case), rather than just in the time domain. We leave formal verification for future work, but we propose that the tensorized self-similarity matrix (and the MINT pipeline) constitutes an existence proof that this setting forces clustering methods to find genuine co-occurrence.

\paragraph{Conclusion.} None of these works, individually or combined, produces an interpretable, mineable representation of cross-series subsequence self-similarity structure.

\section{Definitions and Methodology}\label{sec:methodology}
A \textbf{\textit{time series}} $T=\{t_1, t_2, ..., t_n\}$ is a sequence of real numbers, and a \textbf{\textit{subsequence}} $T[i,j]$ of length $m$ is a contiguous subset of values from $T$ starting at index $i$ and ending at $j$, where $j=i+m - 1$.
A \textbf{\textit{distance profile}} $DP_{AB}^{j,m}$ is the vector of distances between each subsequence in a reference time series $T_A$ and a query subsequence $T_B[j,j  + m - 1]$.
We show examples of these primitives in Figure~\ref{fig:tqdp}.

\begin{figure}
    \centering
    \vspace{-15pt}
    \includegraphics[width=0.99\linewidth]{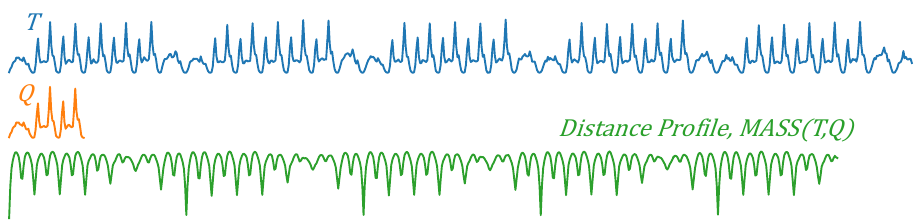}
    \caption{\textit{Top)} A time series \textcolor{MidnightBlue}{T} of Mass Rapid Transit usage. \textit{Middle)} A subsequence \textcolor{orange}{Q} from \textcolor{MidnightBlue}{T} \textit{Bottom)} A \textcolor{OliveGreen}{distance profile} computed using Mueen's Algorithm for Similarity Search~\cite{MASS} of \textcolor{orange}{Q} against \textcolor{MidnightBlue}{T}.}
    \label{fig:tqdp}
    \vspace{-10pt}
\end{figure}

In this work, we consider Self-Join Mplots, which leverage a special case of an AB-Join Matrix Profile, where $T_A$ and $T_B$ are the same time series~\cite{Shahcheraghi}.
\begin{definition}
    An \textbf{Mplot} is the visualized matrix of distance profiles. Each row of this matrix is $DP^{j,m}_{AB}$.
\end{definition}

There are resources available on how to read dot/recurrence plots~\cite{MARWAN2007237_readingdotplots,rakthanmanon2013addressing,march2005recurrence,Shahcheraghi}, but to keep this work self-contained, we  reiterate a few key points, discussing ``light values'' as having low similarity and ``dark values'' having high similarity:
\begin{itemize}
    \item The Mplot is a subsequence self-similarity matrix; consequently, a visible central diagonal has a distance of zero.
    \item The appearance of evenly spaced diagonals parallel to the central diagonal indicates periodicity or cycles in the time series.
    \item Mplots can result in a type of pattern where the slope changes, either ``up'' or ``down'', indicating patterns are either slowing down or speeding up. 
    \item A streak that curves in both directions suggests a pair of subsequences that match after one locally “warps” in order to match the other. This is a useful characteristic that allows us to spot patterns in a warping-invariant manner. 
\end{itemize}
The key novelty in this work is to treat Mplots as decomposable matrices and to extend this perspective to decomposing constructed stacked Mplots using classical tensor decomposition methods.
\begin{definition}
    An $a \times b$  matrix is a function of two integer indices; that is, a function $M: [1..a] \times [1..b] \to \mathbb{R}$. We view a matrix as a ``2-dimensional array'' where $M(i,j)$ denotes the entry at the $i$th row and $j$th column.
\end{definition}

Given an $a \times b$ matrix $M$, 
\begin{enumerate}

    \item Its \textit{entrywise $\ell_1$ norm} is the sum of the absolute values of its entries:

    \[||M||_{1} = \sum_{i =1}^a\sum_{j = 1}^b |M(i,j)|.\]
    
    \item Its \textit{nuclear norm} is the sum of its singular values:

    \[||M||_{*} = \sum_{i =1}^{\min(a,b)} \sigma_i(M).\]
    
\end{enumerate}

\begin{definition}
    A \textbf{tensor} of order $n$ is a function of $n$ integer indices; that is, a function $T: [1 .. k_1] \times...\times[1..k_n] \to \mathbb{R} $. The positive integer $k_l$ is the size of the tensor along the $l$th mode or axis.
    
\end{definition}

Informally, a tensor of order $n$ is an ``$n$-dimensional matrix.''
We say a tensor $T$ is \textit{rank-$1$} if it is the outer product of $n$ vectors.

\[T= \mathbf{a}_1 \otimes ... \otimes \mathbf{a}_n.\]

On a similar line, we say that $T$ is \textit{rank-$R$} if it can be expressed using a minimum of $R$ outer products of $n$ vectors \cite{sidiropoulos2017tensor}.

\[T = \sum_{i=1}^R \mathbf{a}_{1,i} \otimes ... \otimes \mathbf{a}_{n,i}.\]

Our analysis primarily concerns tensors constructed by stacking Mplots along one axis. Our use of the term \textit{tensor} refers to a third-order tensor (3-tensor) unless otherwise specified.
For a given time series dataset $\mathcal{D} = \{T_1, T_2, ..., T_N\}$, the resulting tensors constructed from stacked Mplots are of size $N \times (n -m+1)\times (n-m+1)$, where $n$ is the length of the time series in the dataset, $m$ the subsequence length parameterizing the Mplots, and $N$ the number of time series in $\mathcal{D}$. The methods used to analyze the objects defined above are:

\textbf{Nonnegative Robust PCA}. We adopt a Robust PCA formulation derived from the Stable PCP of Zhou et al. \cite{stablePCP}. Given an $n \times n$ matrix $M$, we find a nonnegative low rank matrix $L$ and a sparse matrix $S$ such that
    \[M = L + S + Z, \]
    where $Z$ is a bounded entry-wise noise term. If $\lambda = \frac{1}{\sqrt{n}}$ is the standard RPCA sparsity parameter, then we consider the problem
    \[\min_{L \geq 0, S} ||L||_{*} +\lambda||S||_1 + \frac{\mu}{2} ||M-L-S||^2_F.\]
    We obtain a heuristic estimate $(\hat{L}, \hat{S})$ by alternating minimization. Initially, $\hat{L} = \hat{S} = 0$. In each iteration, $\hat{L}$ is computed by singular-value thresholding of $M-\hat{S}$ at level $\frac{1}{\mu}$ followed by a nonnegativity clamp, while $\hat{S}$ is computed by soft-thresholding of $M - \hat{L}$ at level $\frac{\lambda}{\mu}$. We iterate until the relative iterate change in $\hat{L}$ falls below $\epsilon = 10^{-5}$ or after a maximum of $5000$ iterations.

    Assume $Z$ has i.i.d. $N(0, \sigma^2)$ entries. With penalty $\frac{1}{2\mu}||M - L - S||_F^2$,  Zhou et al.  prescribe $\mu = \sqrt{2n}\sigma$. Our penalty is $\frac{\mu}{2} ||M-L-S||^2_F$, so the equivalent choice is $\mu = 1/(\sqrt{2n}\sigma)$. We inherit the noise-calibrated choice of $\mu$ from the original formulation \cite{stablePCP}. MINT adds a nonnegativity constraint on $L$, and we adopt $\mu$ (which tunes the residual term to the noise level $\sigma$) as a reasonable heuristic for our form of the objective.\footnote{Zhou et al. establish stable recovery guarantees for Stable PCP
    under the same conditions as PCP. We leave analogous guarantees for the nonnegative-constrained variant and our projected iteration to future work.}
    
    It remains to estimate $\sigma$ given $M$. We accomplish this through a two-step algorithm.
    \begin{enumerate}
    \item We recover a coarse low-rank approximation $M_\tau$  by hard-thresholding singular values at the optimal threshold $\tau = 2.858 \cdot y_{med}$, where $y_{med}$ is the median empirical singular value \cite{optimalHardThreshold}.  We then estimate the entry-wise noise by the residual $Z_\tau =  M - M_\tau$.
    \item Because $Z_\tau$ is  effectively symmetric, each off-diagonal residual appears twice; we compute the median absolute deviation (MAD) over its strict upper triangular values to count each value once. Then we estimate 
    \[\hat{\sigma} = \frac{\text{MAD}}{0.6745}.\] This parallels the formulation in Sato and Ono, Section IV-I \cite{sato2026joint}.
    \end{enumerate}
    
\textbf{Canonical polyadic decomposition (CPD)}. Given an order-$n$ tensor $T$ and a rank $R$, CPD returns $n$ matrices
    \[A_1 = [\mathbf{a}_{1,1}, ...,\mathbf{a}_{1,R}],\]
    \[...\]
    \[A_n = [\mathbf{a}_{n,1}, ...,\mathbf{a}_{n,R}],\]
    such that
    \[T = \sum_{i=1}^R \mathbf{a}_{1,i} \otimes ... \otimes \mathbf{a}_{n,i}.\]
    In other words, the $k$th matrix gives the $k$th terms of each outer product in the outer product decomposition \cite{sidiropoulos2017tensor}.  
    We interpret each of the $R$ columns in the matrices as representing a unique \textit{component} or \textit{pattern}. If we stack Mplots along the first axis of a $3$-tensor, then we interpret $A_k(i,j)$ as the expression of component $j$ in Mplot $i$ (if $k$ equals $1$) or in the distance profile of subsequence $i$ (if $k$ equals $2$ or $3$). %
We introduce the tensorized self-similarity matrix as a new time series data mining primitive, instantiated by MINT (\textbf{M}plots \textbf{IN}to \textbf{T}ensor) as an initial method of computation.
MINT is described by the pipeline shown in Figure~\ref{fig:bespoke_taipeiMRT_series}:
\begin{enumerate}
    \item \textbf{Data loading and preprocessing.} 
    Given a dataset $\mathcal{D}$, we extract time series $T_1, T_2, ... ,T_N \in \mathcal{D}$, aligned on a common axis with nearly uniform intervals. While implementations of the Matrix Profile can handle missing values in the input data, MINT requires NaNs to be preprocessed.
    \item \textbf{Mplot Computation.} We use the SCAMP library~\cite{ZimmermanSCAMP} to compute a set of Mplots for all $T_i \in \mathcal{D}$. It appears at first glance that the time complexity of Mplot construction is $O(mn^2)$, where $m$ is the subsequence length parameterizing the Mplot computation. However, advances in GPU implementations~\cite{ZimmermanSCAMP,Shahcheraghi} render this process more reflective of $O(n^2)$ in practice. We limit Mplots to a $k \times k$ window, where $k < n$ is a predefined \textit{pooling parameter}; in the SCAMP library, pooling is accomplished using \texttt{max()} for a similarity matrix and \texttt{min()} for a distance matrix. Prior work in hyperspectral imaging \cite{jayaram2004} shows that lossy compression of sensor data can retain classification-relevant structure; we observe the analogous property for min-pooled distance matrices.
    \item \textbf{Tensor Construction.} For each Mplot $M_i$, we use Nonnegative Robust PCA to decompose it into a low-rank matrix $L_i$ and a sparse matrix $S_i$. Using exact SVD, this can be accomplished in approximately $O(n_\text{iter}k^3)$ time, where $n_\text{iter} \leq 5000$ is the number of iterations. In practice,  Robust PCA is the primary bottleneck of the proposed pipeline, particularly with large Mplots.
    The low-rank matrices are mean-centered, then stacked to form a 3-tensor $T$ of dimensions $N \times k \times k$, which is then magnitude-normalized. 
    \item \textbf{CPD Decomposition.} We use TensorLy's~\cite{tensorly} CANDECOMP/PARAFAC method to perform CPD on $T$ in $O(n_\text{iter}k^2N)$ time (where $n_\text{iter}$ is again the number of iterations), thereby producing three matrices 
    \[
    A = [\mathbf{a}_1 ,...,\mathbf{a}_R],
    B = [\mathbf{b}_1 ,...,\mathbf{b}_R],
    C = [\mathbf{c}_1 ,...,\mathbf{c}_R],
    \]
    of dimensions $N \times R$, $k \times R$, and $k \times R$ respectively. The rank $R \in [2..8]$ is estimated by locating the elbow point of the reconstruction error curve using the Kneedle algorithm. If no elbow is found, a flag is raised and the procedure defaults to $2$ components. The process returns three \textit{factor matrices}, which are then further analyzed.
\end{enumerate}

\begin{figure}
    \centering
    \includegraphics[width=1.0\linewidth]{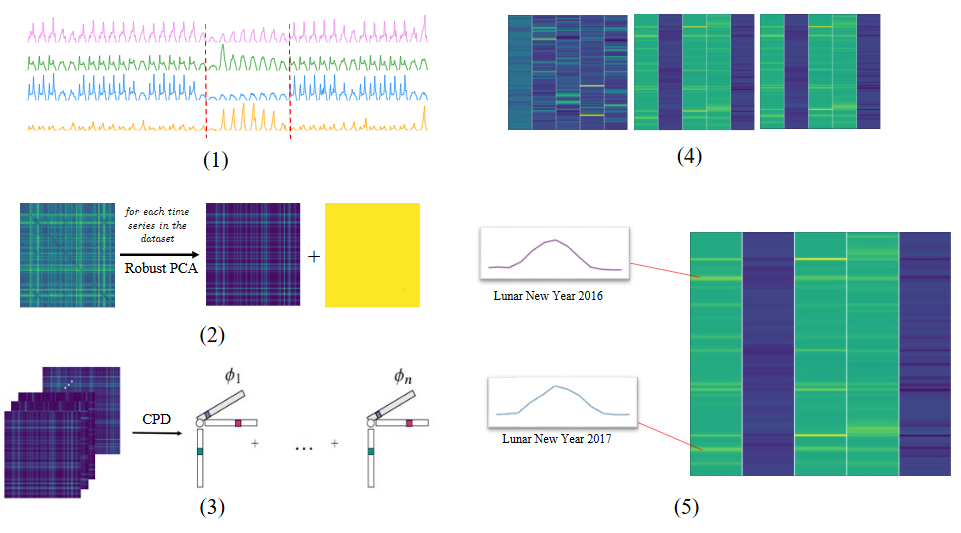}
    \caption{Overview of the MINT pipeline. Analysis of peaks in individual loadings enables easy visual identification of cross-sensor phenomena, such as Lunar New Year festivities.}
    \label{fig:bespoke_taipeiMRT_series}
    \vspace{-10pt}
\end{figure}

This setting is distinct from multivariate time series analysis, where the $d$ dimensions are semantically coupled by construction.
In our setting, univariate series are independent sensors whose co-clustering structure is \textit{latent}, recoverable without labels or assumptions, rather than given.
The reader will also appreciate that this primitive is grounded in tensor mining, and naturally extends to multivariate time series datasets.

It remains to justify mean-centering low-rank matrices in step (3). Each series' Mplot (hence low-rank matrix) has a varying amount of self-similarity, from highly regular or periodic to noisy. A series' baseline self-similarity level, considered as a mean-determined offset of the similarity matrix, might describe it in isolation, but it will not contribute information about when patterns recur or which series also share them. If not subtracted, the offset carries a large Frobenius norm (``energy'') contribution, and CPD will spend a component ranking sensors by overall self-similarity. With a limited rank range, mean-centering of low-rank matrices enables us to surface genuine co-clustering information rather than per-slice global similarity backgrounds.

\section{Datasets}\label{sec:datasets}
We present our results through a set of five case studies on diverse datasets from various domains with different sampling intervals.
The \textbf{Mass Rapid Transit (MRT)} dataset~\cite{GoldenBatchYeh} contains entry and exit ridership data from 108 Taipei stations sampled hourly during operating hours over a total length of 517 days between November 1, 2015 and March 31, 2017.
Sample subsequences from this dataset are shown in Figure~\ref{fig:bespoke_taipeiMRT_series} and Figure~\ref{fig:tqdp}.
Sample time series from the \textbf{Spatio-Temporal Traffic (LargeST)} dataset~\cite{liu2023largest} are shown in Figure~\ref{fig:largeST_transformations}. This dataset consists of traffic data from a total of 8,600 sensors in California, USA over five years, at a base sampling rate of 5 minutes. Prior to analysis, we extracted data from the year 2019 and aggregated them into 15-minute means. The \textbf{Electricity Load Diagrams dataset} \cite{electricity} contains the quarter-hourly electricity consumption (in kW) of 370 clients from 2011 to 2014. We used only 2013, a year in which most clients had complete records, to minimize long initial zero sequences, and converted the data to units of kWh before analysis.

Wind Farm A of the \textbf{CARE to Compare} dataset \cite{caretocompare2024} is based on data from the EDP open platform\footnote{https://www.edp.com/en/innovation/open-data/data} and consists of five wind turbines of an onshore wind farm in Portugal.
The wind farm data contains $22$ datasets with a total of $86$ features. For a given turbine, various sensor data traces are available, including but not limited to the ambient temperature, wind direction, temperature recordings from various turbine components (nacelle controller, gearbox, hub controller, generator, etc.), voltage, and current. 54 unique sensor traces are represented by derived statistical features such as the $10$-minute average, max, min, and standard deviation.

Testing was performed on Dataset 0, comprising one year of training data and a prediction window containing a labeled anomaly window preceding a generator bearing failure, surrounded by padding. Before processing, only the $10$-minute average of each sensor trace is taken, which results in $54$ considered sensors. Sample subsequences from Dataset 0 are shown in Figure~\ref{fig:wind_turbines}.

The \textbf{Electrical Power Demand (OPSD)} dataset\footnote{https://open-power-system-data.org/} is the recorded electrical demand of 17 European countries at hourly sampling intervals. 
Each country has at least one time series, all of which span dates from December 31, 2014 to September 30, 2020, or 50,401 time steps.

\subsection{Case Study: OPSD}
We provide an existence proof for MINT's ability to surface real structure which no single-series method finds. We run the pipeline on the OPSD dataset. Due to a small number of sensors (18), we impute NaN-filled sequences with random noise. From the method, we obtain three factor matrices $A$, $B$, and $C$, which are then visually analyzed for patterns. 

\begin{figure}
    \centering
\includegraphics[width=\linewidth]{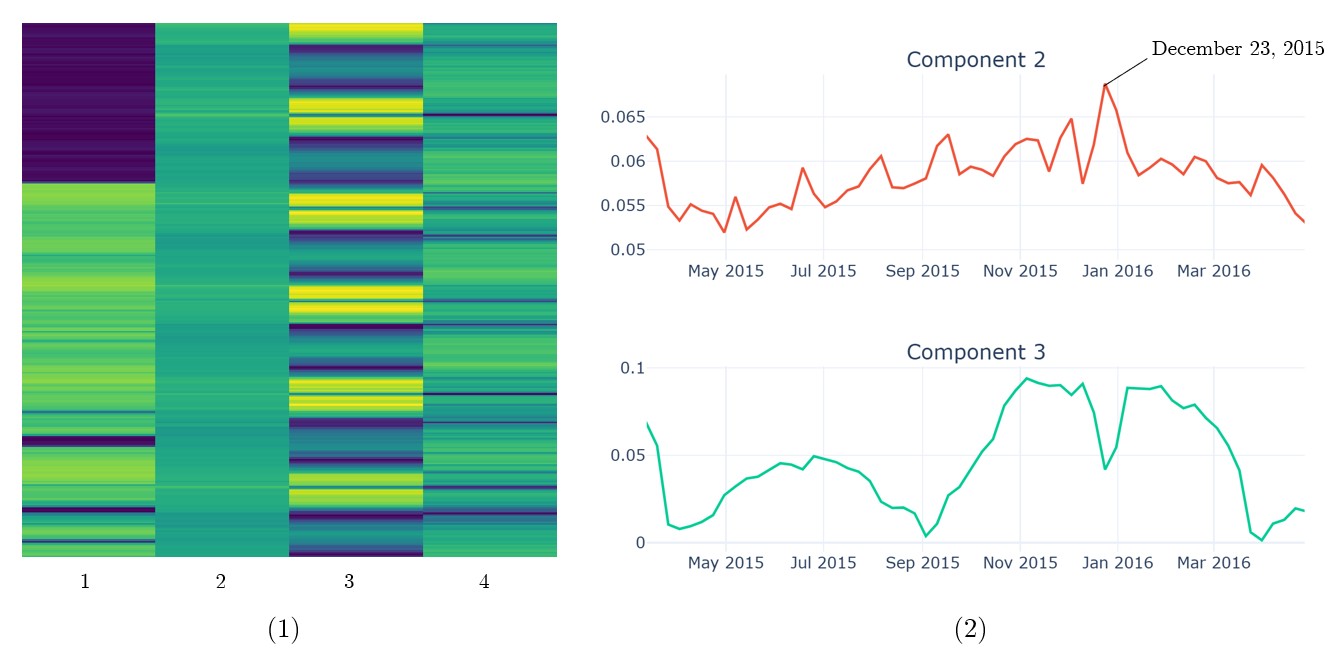}
    \caption{(1) Transformed factor matrix $C$ for the OPSD dataset. (2) Analysis of Components 2 and 3 indicates that they correspond to distinctive Christmas and seasonal electricity demand patterns, respectively.}
    \label{fig:opsd_c_analysis}
    \vspace{-10pt}
\end{figure}

Figure \ref{fig:opsd_c_analysis} depicts this analysis applied on factor matrix $C$ (in particular, its entrywise absolute value $|C|$). The vertical axis corresponds to time, with each pixel signifying a week-length interval. For each component, we identify highly expressed positions (indices $i$ for which $|C(i,j)|$ is a high value) and map them back to dates in the original data. 

From this process, we conclude that the MINT components compartmentalize distinct cross-sensor temporal patterns. For example, two out of the five highest expressed positions in Component 2 correspond to Christmas 2015 and 2019, while two more are in the 2014 and 2016 holiday seasons. Component 3, on the other hand, corresponds to seasonal electricity patterns. Its signal follows an annual schedule with high peaks in winter, a slightly smaller peak in summer, and troughs in the intervening fall and spring. In particular, even though Component 3 troughs during the holiday season, it still attains a background ``winter season'' level that exceeds fall and spring. We expect short-scale holiday phenomena and long-scale seasonal periodicities to correspond to distinct electricity demand signatures, and this is reflected by MINT's component segregation.

The tensorial perspective is crucial: while a single-series method (such as the Matrix Profile) applied on each sensor's time series would detect distinct electricity demand patterns each winter, it would be difficult to distinguish intensities and isolate contributing factors (e.g. cold temperatures versus holiday festivities) without additional contextual information. In contrast, the highlighted components in the MINT tensor  generalize information from diverse time series. Thus, MINT not only identified but also \textit{separated} Christmas events and seasonal patterns across \textit{all} time series in the dataset, regardless of other confounding phenomena that might occur in each time series.

\section{Experimental Setup}\label{sec:experiment_setup}
This work's primary novelty lies in identifying co-clustering phenomena in time series, capturing both seasonality and shapelets simultaneously in $O(N \cdot n^2 +N \cdot k^3)$ time, where  $n$ is the time series length, $k$ is the side length of the pooled Mplots, and $N$ is the number of series transformed into Mplots to construct the tensor. Deep clustering approaches for time series typically rely on learned representations and iterative optimization~\cite{paparrizos2024bridging}, resulting in unpredictable runtime and resource requirements that stand in contrast to the analyzable computational complexity of MINT.
We test our claims to co-clustering through robustness testing by Noise Injection.

\subsection{Single-Trial Procedure}\label{sec:single_trial_procedure}
Let us describe a univariate time series $T$ with a set of $N$ sensors, where the notation $T_i$ indicates the $i$th sensor.
We choose $b<N /2$ and a set $\mathcal{W}$ of disjoint windows of length $m$ from $T$. We construct $\mathcal{W}$ by dividing $T_i$ into windows of length $m$ and randomly selecting $|\mathcal{W}|$ of these windows.
For each trace $T_i \in T$, we randomly assign one of the following transformations $\theta_x$ to produce $\hat{T_i}$:
\begin{itemize}
    \item $b$ time series $T_i \in T$ are \textit{Completely Random} ($\theta_1(T_i)$): $\hat{T_i}$ is reassigned to be random noise. We measure the trace mean and standard deviation to construct the signal of random normal noise.
    \item $b$ time series are \textit{Mostly Random} ($\theta_2(T_i)$): $\hat{T_i}$ is transformed to be identical to its normal time series only in the $|\mathcal{W}|$ disjoint windows, and is otherwise random noise, as described above.
    \item $N - 2b$ time series are \textit{Mostly Normal} ($\theta_3(T_i)$): $\hat{T_i}$ is identical to its normal time series everywhere except the $|\mathcal{W}|$ disjoint windows, which are replaced with random noise.
\end{itemize}
In Figure~\ref{fig:largeST_transformations}, we show the original time series $T_i$ and its three transformed series:  $\theta_1(T_i)$, $\theta_2(T_i)$, $\theta_3(T_i)$. 
After computing the Mplots for all $\hat{T} = \{\hat{T_1}, ..., \hat{T_N}\}$, we decompose the tensor using CPD.
For each resulting component $\phi$ in the decomposition, we obtain a set of top-$s$ sensors $S(\phi)$ from a distinctiveness pipeline.

\begin{figure}
    \centering
\includegraphics[width=\linewidth]{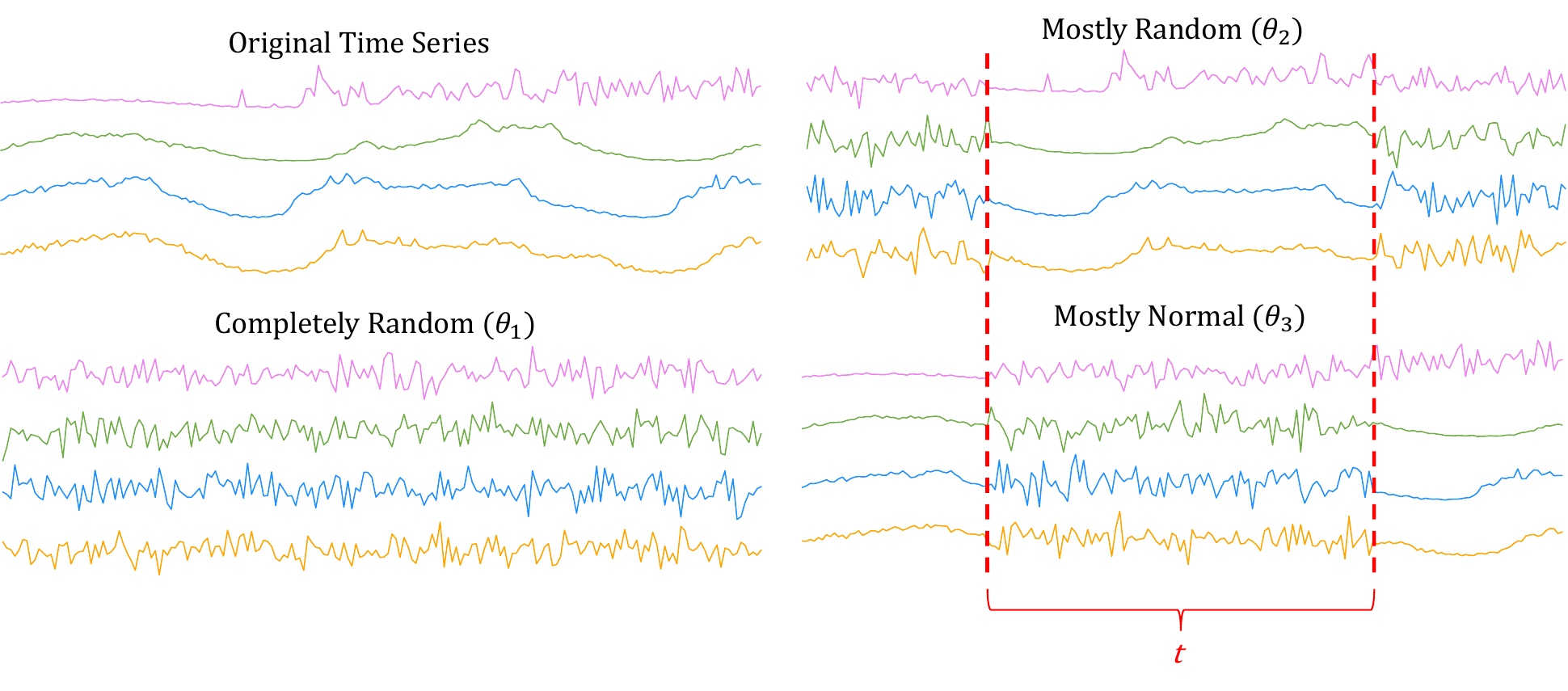}
    \caption{Four subsequences from four time series from the LargeST dataset, surrounding one disjoint window $t$. The Completely Random $\theta_1(T_i)$, Mostly Random $\theta_2(T_i)$, and Mostly Normal $\theta_3(T_i)$ transforms on the original $T_i$ are shown. In Mostly Random subsequences, we see $t$ contains the original data, and the remainder of the series is random noise. In the Mostly Normal series, $t$ is replaced with random noise, and the remainder of the series is the original data.}
    \label{fig:largeST_transformations}
\end{figure}

\subsubsection{Distinctiveness Pipeline}
Given the $i$th component $\phi$, we expect the sensors (or multidimensional traces) $T_j$ for which $|A(j,i)|$ is highest would exhibit the highest ``expression'' of $\phi$. Similarly, among subsequences within the sensors, those indexed by $j$ with the largest $|B(j,i)|$ and $|C(j,i)|$ values are expected to most strongly express $\phi$. Combining these observations, we conclude that the top-$s$ length $m$ subsequences of the top-$s$ sensors represent ``archetypal patterns'' that best capture the structural features encoded by $\phi$. We identify these sequences for all $\phi$ through this \textit{Distinctiveness Pipeline}:
\begin{enumerate}
    \item Use CPD to find the factor matrices \textbf{$A$}, \textbf{$B$}, and \textbf{$C$}.
    \item In component $\phi$ of \textbf{$A$}, find the top-$s$ indices and find the corresponding ``sensor'' names and store in a set $S(\phi)$.
    \item In component $\phi$ of \textbf{$B$}  (or \textbf{$C$}), find the top-$s$ indices and map them to their corresponding time series intervals. To account for pooling effects, we pad these intervals minimally: if an interval $[a,b]$ is mapped and each pixel on the Mplot corresponds to $c_k$ positions, we extend the interval by $c_k$ on both sides, yielding $[a-c_k, b+c_k]$. Store these adjusted intervals in a set $I(\phi)$.
    \item For each interval $[i,j] \in I(\phi)$ and each sensor $\varsigma \in S(\phi)$, consider the subsequence $T_\varsigma[i,j]$.
\end{enumerate}

The choice of $B$ or $C$ is arbitrary because (up to numerical error) they are effectively equivalent; for our experiments, we fix the factor matrix used as $C$.

\subsection{Validation Criteria}
Using $S(\phi)$ and $I(\phi)$, we consider four properties to demonstrate the utility of MINT to find co-clustering information.
\begin{enumerate}
    \item $\mathbf{P_1}$: There exists a $\phi$ such that for all $I \in I(\phi)$ there exists $I' \in \mathcal{W}$ for which $I' \cap I \neq \emptyset$. %
    \item $\mathbf{P_2}$: Each component strongly reflects sensors transformed by exactly one of the transformations $\theta_1$, $\theta_2$, $\theta_3$. That is, for each $\phi$, there exists $x \in \{1,2,3\}$ such that $\theta_x$ \textit{corresponds} to $\phi$, i.e. more than half of the elements of $S(\phi)$ are transformed by the same $\theta_x$.
    \item $\mathbf{P_3}$: There exists at most one component $\phi$ s.t. $S(\phi)$ contains $\theta_1$ (\textit{Completely Random}) sensors.
    \item $\mathbf{P_4}$: $\theta_3$ (\textit{Mostly Normal}) \textit{corresponds} to a \textit{significant majority} of the components. We fix the threshold for this criterion at $\lfloor 0.7 R \rfloor$, where $R$ is the number of components. This choice permits small decompositions (e.g., $R=2$) to remain valid even if only one component is classified as \textit{Mostly Normal}.
\end{enumerate}

Each of the properties reflects a desirable aspect of the cross-mode co-clustering behavior that our pipeline is designed to exhibit. $\mathbf{P}_1$ measures the ability of MINT to account for strong regularities in similar positions across  series.
$\mathbf{P}_2$ ensures similarly transformed  sensors are grouped in similar categories, and $\mathbf{P}_3$ and $\mathbf{P}_4$ formalize the assumption that the dominant sources of variation in the transformed dataset reflect the original data (as opposed to noise imputation).

\subsection{Hypothesis Testing}
In the random-selection model, the top-$s$ sensors and the left endpoints of top-$s$ intervals are selected uniformly and without replacement from $[1 .. N]$ and $[1.. n]$ respectively, independently for each of the $R$ components. Each property is a deterministic function of the resulting category count, so its chance pass rate can be bounded above by a hypergeometric or binomial tail probability. With the experimental constants defined in Section \ref{sec:experiment_details}, chance pass rates are bounded in Appendix \ref{app:chance_pass_rates} and characterized below.
\begin{align*}
p_1^\text{chance} &\le 0.00584,
&
p_2^\text{chance} &\le 0.176 , 
\\
p_3^\text{chance} &\le 0.325,
&
p_4^\text{chance} &\le 0.570.
\end{align*}
Therefore, to differentiate genuine co-clustering robustness from that expected under random selection, we must examine the \textit{proportion} of incidence for each property. To this end, the procedure defined in Section \ref{sec:single_trial_procedure} is run on the same time series $T$ for 50 trials. Across all trials, we compute the proportion of trials for which each of $\mathbf{P}_1, \mathbf{P}_2, \mathbf{P}_3, \text{and } \mathbf{P}_4$ is satisfied (thus obtaining $\hat{p}_1,\hat{p}_2,\hat{p}_3,\text{ and }\hat{p}_4$ respectively). To ensure that a chance process lies inside the null region, we define the hypotheses
\begin{align*}
(H_{0,i}):&~ p_i \leq 0.7, & (H_{1,i}):&~ p_i  > 0.7, & \text{for $i = 1,2,3,4$}.
\end{align*}
We reject or fail to reject each null hypothesis using a one-tailed z-test for proportions with a Bonferroni-corrected significance of $\alpha = 0.0125$.

\subsection{Experiment Details}
\label{sec:experiment_details}
We performed the experiment pipeline on four datasets: (1) Taipei MRT entrance data, (2) 2019 LargeST Traffic data, (3) 2013 Electricity Load Diagrams data, and (4) CARE to Compare wind turbine data. In the four experiments, all sensor columns were used with the exception of those resulting in Mplots containing NaNs. %
This resulted in yields of 108/108 sensors for Taipei MRT, 8303/8600 sensors for LargeST, 332/370 sensors for Electricity Load Diagrams, and 43/54 sensors for CARE to Compare.
In experiments (1) and (4), each trial was tested on all NaN-free sensors, while in experiments (2) and (3), each trial was tested on a sample of $100$ randomly selected NaN-free sensors.

Experiments were conducted on a Windows 10 virtual machine with $16$ vCPUs and $64$ GiB of RAM. The trials were conducted in parallel and independently on separate deterministic seeds. Trials that yielded NaNs in Mplots, when identified, were discarded altogether and re-tried on a new seed. In total, $50$ valid trials were obtained for each dataset.

For the four datasets, we set the subsequence lengths $m_1 = 146$, $m_2 = m_3 = 95$, and $m_4 = 140$, respectively.
These selections correspond to practical use cases in identifying week-length, day-length, and day-length motifs respectively.
Following~\cite{Shahcheraghi}, we select subsequence lengths $m$ to be slightly less than the length of the motifs of interest to help ensure these motifs are visible in the resulting Mplot. For most datasets and practical applications, Mplots are generally forgiving with respect to the exact value of $m$.
We set the pooling parameter $k =  \lceil \frac{3n}{m} \rceil$, so that each Mplot pixel corresponds to $c_k \approx\frac{m}{3}$ positions. We define the co-clustering window set $\mathcal{W}$ such that it covers 10\% of each time series. 
We choose $s = 6$ as the minimum number of top-$s$ sensors for which our four properties remain nontrivial. Finally, we set $b= \lceil 0.25N \rceil$ as the number of sensors transformed by each of $\theta_1$ and $\theta_2$.

\section{Results and Discussion}\label{sec:results}
Table \ref{tab:proportion_estimates} lists the proportions $\hat{p}_i$ for $i = 1,2,3,4$ obtained after conducting $50$ trials on the four datasets tested.

\setlength{\textfloatsep}{5pt} %
\setlength{\floatsep}{5pt}

\begin{table}[ht]
\centering
\footnotesize
\setlength{\tabcolsep}{3pt}
\renewcommand{\arraystretch}{1.6}
\providecommand{\cell}[2]{\makecell{#1\\{\scriptsize ($#2$)}}}
\resizebox{\columnwidth}{!}{%
\begin{tabular}{l c c c c}
\toprule
& \makecell{\textbf{Taipei MRT}} & \makecell{\textbf{LargeST}} & \makecell{\textbf{Electricity}\\\textbf{Load Diagrams}} & \makecell{\textbf{CARE to Compare}\\\textbf{(Wind turbines)}} \\
\midrule
$\hat{p}_1$ & \cell{1.00}{<0.001} & \cell{1.00}{<0.001} & \cell{1.00}{<0.001} & \cell{0.92}{<0.001} \\
$\hat{p}_2$ & \cell{1.00}{<0.001} & \cell{1.00}{<0.001} & \cell{1.00}{<0.001} & \cell{1.00}{<0.001} \\
$\hat{p}_3$ & \cell{1.00}{<0.001} & \cell{1.00}{<0.001} & \cell{1.00}{<0.001} & \cell{1.00}{<0.001} \\
$\hat{p}_4$ & \cell{1.00}{<0.001} & \cell{1.00}{<0.001} & \cell{1.00}{<0.001} & \cell{1.00}{<0.001} \\
\bottomrule
\end{tabular}%
}
\caption{Estimated proportions $\hat{p}_1$--$\hat{p}_4$ with p-values (in parentheses) for hypothesis tests $H_{0,1}$--$H_{0,4}$.}
\label{tab:proportion_estimates}
\end{table}

For all four experiments, the null hypotheses $H_{0,i}$ (for $i = 1,2,3,4$) were rejected at the $0.0125$ significance level. We thus conclude that for the tested datasets, each of $\mathbf{P}_1$, $\mathbf{P}_2$, $\mathbf{P}_3$, and $\mathbf{P}_4$ occurs in \textit{most} (that is, more than 70 percent) of the tensor decompositions produced by the MINT pipeline. Since the random-selection model for each property falls inside the null region, this suggests that MINT meaningfully co-clusters time-aligned cross-series phenomena.
\begin{figure}[htbp]
    \centering
    \vspace{2pt}
    \includegraphics[width=0.8\linewidth]{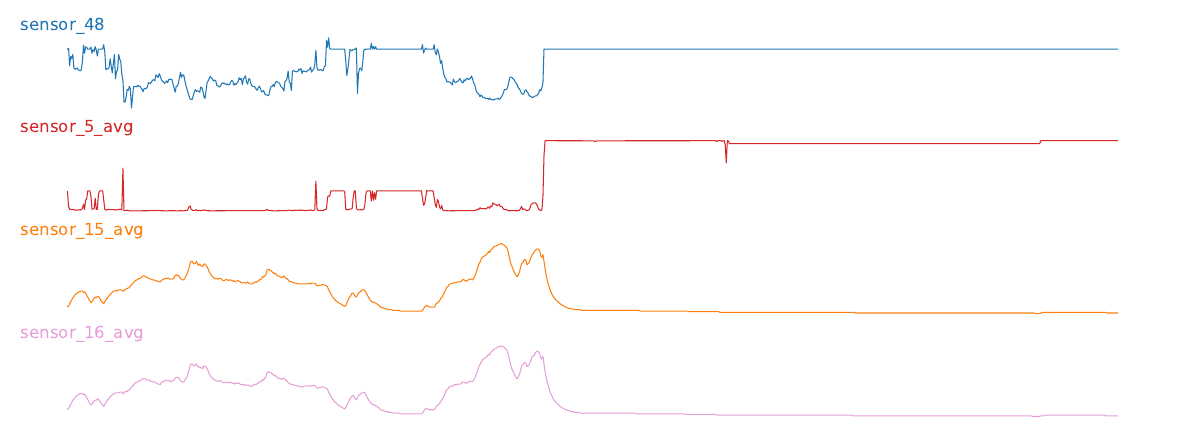}
    \vspace{2pt}
    \caption{Last week of data for four time series from the CARE to Compare dataset. Observe the time-aligned degenerate flatlines.}
    \label{fig:wind_turbines}
\end{figure}

Observe that properties $\mathbf{P}_2, \mathbf{P}_3, \text{and } \mathbf{P}_4$ were satisfied on all $50$ trials for the four datasets. However, only $92 \%$ of  trials satisfied $\mathbf{P}_1$ in the CARE to Compare wind turbine data, as compared to $100\%$ satisfaction in the other three datasets. %

We speculate that there may be two underlying reasons for the CARE dataset's unique behavior.
\begin{enumerate}
     \item Unlike the other three datasets, CARE to Compare has \textit{heterogeneous} sensors: distinct sensors (e.g. ambient temperature versus wind direction) measure different quantities. Therefore, aligned windows in $\theta_3$-transformed sensor traces are less globally similar and thus less likely to intersect the ``archetypal patterns'' (i.e. top-$s$ intervals) of a given component.
      \item Degenerate flatlines predominate within the post-anomaly tails of specific sensors, as illustrated in Figure \ref{fig:wind_turbines}. These highly time-aligned cross-series patterns, if not disturbed by noise, may become top-$s$ intervals or even dominate a high-energy CPD component, decreasing the incidence of top-$s$ intervals in the first $R$ components intersecting $\mathcal{W}$-windows.
\end{enumerate}

 While we leave formal verification of (1) and (2) and their precise linkage to incidence rates of $\mathbf{P}_1$ to future work, the results of all four hypotheses retaining their significance lend nontrivial support to the tensorized self-similarity matrix primitive as instantiated in the MINT pipeline, even in heterogeneous datasets.

\section{Conclusions and Future Work}\label{sec:conclusion}
We introduce the tensorized self-similarity matrix as a novel time series data mining primitive, instantiated through MINT as an initial method of computation. 
Where neural approaches implicitly aggregate cross-series information through learned representations, this primitive makes cross-series structure explicitly recoverable and interpretable.
Each component of the decomposition directly indexes the sensors, subsequences, and time intervals that participate in a shared pattern.
The low-rank structure recovered by tensor decomposition corresponds to interpretable cross-series phenomena, as demonstrated across MRT, wind turbine, traffic, and electricity datasets spanning multiple domains and sampling regimes.
Future work includes extension to alternative self-similarity measures (e.g. dynamic time warping, phase-space embeddings), and additional application to multivariate time series datasets to further characterize the primitive's behavior across diverse analytical settings.

\section*{Acknowledgments.}
The work at UCR was partially supported by the National Science Foundation under CAREER grant IIS \#2046086. The views and conclusions in this paper are those of the authors and should not be interpreted as representing any funding agencies.

In the final two weeks leading to submission, GPT-5.4 and Claude Fable 5 were used to debug LaTeX compilation issues, and audit a lengthy first draft of the Introduction, for grammar corrections in the Related Works.
To audit, prompts asked for sentence-by-sentence draft evaluation, to determine where content might be redundant.
No text was generated a priori.
The authors referenced the generated audit information to, \textit{by their own efforts}, rewrite sentences in a condensed manner to meet the submission page limits.

\bibliographystyle{siamplain}
\bibliography{references}
\section*{Appendices}
\appendix

\section{Code} All code used in this paper is stored in the repository at \url{https://github.com/InfinityImperium/mint-tensor-data-mining}.

\section{Bounding Chance Pass Rates}\label{app:chance_pass_rates}For a time series $T$ of $N$ length-$n$ sensor traces, let $m$ denote subsequence length and $c_k$ the Mplot pixel width. Suppose the random-selection model holds for $T$. In the single-trial procedure, a set of chosen windows $\mathcal{W}$ is constructed with coverage $f = \frac{m|\mathcal{W}|}{n}$, while fractions of $p_b, p_b,\text{and } 1-2p_b$ of the time series are transformed by $\theta_1, \theta_2,\text{ and }\theta_3$ respectively. From each of the $R \in [2 ..8]$ components of the tensor decomposition, the top-$s$ sensors and top-$s$ intervals are obtained. The experimental constants in Section \ref{sec:experiment_details} fix $c_k \approx m/3$, $f \le 0.10$,  $p_b = 0.25$, and $s = 6$. For clarity we assume $p_bN$ is an integer so that $b = p_b N$ exactly.

We now establish bounds on the four chance pass rates.\begin{proposition}
    In the random-selection model,
    \[p_1^\text{chance} \leq 0.00584.\]
\end{proposition}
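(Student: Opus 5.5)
The plan is to bound, for a single component, the probability that all $s$ of its top-$s$ intervals meet the window set $\mathcal{W}$. Then I take a union bound over the at most $8$ components, because $\mathbf{P}_1$ only asks that \emph{some} $\phi$ succeed. Since components are independent in the random-selection model, one could use $1-(1-q^s)^R$ exactly, where $q^s$ is the single-component bound. The union bound $p_1^{\text{chance}}\le R\,q^s\le 8q^6$ is simpler and loses almost nothing at these magnitudes. I aim to show $q\le 3f\le 0.3$, which gives $8\cdot 0.3^6 = 8\cdot 0.000729 = 0.005832\le 0.00584$.

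\textbf{Step 1 (hit probability of one interval).} A selected interval starts at a uniformly random left endpoint $a$. It has base length $m$ and is padded by $c_k$ on each side, so it is $[a-c_k,\,a+m-1+c_k]$ with length $L=m+2c_k$. This interval meets a fixed window $[w,w+m-1]$ exactly when $a$ lies in a set of at most $m+L-1$ consecutive integers. Summing over the $|\mathcal{W}|$ windows, the set of ``good'' left endpoints has size at most
\[
|\mathcal{W}|(m+L)=|\mathcal{W}|(2m+2c_k).
\]
With $c_k\approx m/3$ this is about $\tfrac{8}{3}m|\mathcal{W}|$. To absorb the rounding in $c_k$ (ceilings in $k=\lceil 3n/m\rceil$, pixel boundaries, and boundary effects where the padded interval is clipped at $1$ or $n$), I will use the safe bound $L\le 2m$. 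Then the good set has size at most $3m|\mathcal{W}|$, so $q\le 3m|\mathcal{W}|/n = 3f\le 0.3$.

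\textbf{Step 2 (all $s$ intervals hit).} The $s=6$ left endpoints are drawn uniformly \emph{without} replacement. Let $G$ be the good set. The probability that all six land in $G$ is
\[
\prod_{j=0}^{5}\frac{|G|-j}{n-j}.
\]
Each factor is at most $|G|/n\le q$, since $|G|\le n$. This is the hypergeometric-dominated-by-binomial step, and it gives $\Pr[\text{all six hit}]\le q^6$.

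\textbf{Step 3 (combine).} The union bound over $R\le 8$ components gives $p_1^{\text{chance}}\le 8\,(0.3)^6\le 0.00584$.

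The main obstacle is Step 1. I need to pin down a clean, defensible constant $3$ in $q\le 3f$ that covers the approximation $c_k\approx m/3$ and the discretization from pooling. If the paper's exact conventions make $L$ slightly larger than $2m$, the constant must be re-derived. I would first try to justify $c_k\le m/2$ directly from $k=\lceil 3n/m\rceil$, which holds comfortably since $c_k = n/k \le m/3$ up to integer rounding.
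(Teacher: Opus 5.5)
Your proof is correct and matches the paper's argument step for step: you bound the per-interval hit probability by $|\mathcal{W}|(2m+2c_k)/n = 2f(1+c_k/m) \le 3f$ using $c_k \le m/2$, raise it to the $s$-th power, and union-bound over $R \le 8$ components to get $8\cdot 0.3^6 = 0.005832$. Your explicit check that sampling without replacement gives factors $\frac{|G|-j}{n-j} \le \frac{|G|}{n}$ is a small gain in rigor, since the paper only asserts the $s$-th power bound.
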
\begin{proof}
    Prior to tensor decomposition, $w= \frac{n \cdot f}{m}$ $\mathcal{W}$-windows are planted. An interval of length $l = m + 2c_k$ intersects a $\mathcal{W}$-window $[i ..i + m)$ if its left endpoint lies in $(i - l ..i +m)$, so it intersects one of the $w$ windows with probability at most 
    \[w \cdot \frac{(m + l)}{n} = \frac{nf}{m} \cdot\frac{2m + 2c_k}{n}= 2f \cdot (1 + \frac{c_k}{m}).\]
    For a given component, all top-$s$ intervals intersect $\mathcal{W}$-windows with probability at most $(2f \cdot (1 + \frac{c_k}{m}))^s$. Taking a union bound over $R$ components, we yield
    \[p_1^\text{chance} \le R\left(2f \cdot  (1 + \frac{c_k}{m})\right)^s.\]
    With $s  = 6$, $f \le 0.10$, $\frac{c_k}{m} \le 1/2$, and $R \le 8$ we obtain the desired inequality.
\end{proof}
\begin{proposition}
    In the random-selection model,
    \[p_2^\text{chance} \le 0.176.\]
\end{proposition}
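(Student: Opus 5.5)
The plan is to compute, for a single component, the exact probability $q$ that the top-$s$ sensor set $S(\phi)$ has a strict majority from one transformation class. Independence across components then gives $p_2^{\text{chance}} = q^R \le q^2$, since $R \ge 2$ and $q<1$. This uses the fact that $\mathbf{P}_2$ is a conjunction over \emph{all} $R$ components, unlike $\mathbf{P}_1$, which needed a union bound. With $s=6$, "more than half" means at least $4$ of the $6$ sensors share one $\theta_x$. Two classes cannot both have $\ge 4$ members among $6$, so the three events $\{X_x \ge 4\}$ are disjoint and $q = \sum_{x=1}^3 \Pr[X_x \ge 4]$. Here $X_x$ counts the $\theta_x$-sensors in $S(\phi)$.

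In the random-selection model, $S(\phi)$ is a uniform $6$-subset of $[1..N]$. The class sizes are $p_bN, p_bN, (1-2p_b)N$, i.e.\ fractions $0.25, 0.25, 0.5$. So each $X_x$ is hypergeometric with $6$ draws and success fraction $0.25$ or $0.5$. I would first do the computation in the with-replacement (binomial) surrogate:
\begin{align*}
\Pr[\mathrm{Bin}(6,\tfrac12)\ge 4] &= \tfrac{15+6+1}{64} = 0.34375, \\
\Pr[\mathrm{Bin}(6,\tfrac14)\ge 4] &= 15\cdot\tfrac{9}{4^6} + 6\cdot\tfrac{3}{4^6} + \tfrac{1}{4^6} = \tfrac{154}{4096} \approx 0.03760.
\end{align*}
Summing $0.34375 + 2\cdot 0.03760$ gives $q \approx 0.41895$, hence $q^2 \approx 0.1755 \le 0.176$, matching the stated constant.

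The main obstacle is justifying that the hypergeometric tail $\Pr[X_x \ge 4]$ is at most the binomial tail. Hoeffding's convex-ordering result only covers convex functions of the count, and an indicator of $\{X \ge 4\}$ is not convex. I would first invoke the known comparison for upper tails beyond the mean. The threshold $4$ exceeds the means $1.5$ and $3$, and in that regime sampling without replacement is more concentrated, so the hypergeometric tail is at most the binomial tail (e.g.\ via Ehm's or a direct likelihood-ratio monotonicity argument on the ratio $\Pr_{\mathrm{hyp}}[X=j]/\Pr_{\mathrm{bin}}[X=j]$, which decreases in $j$ past the mean). If that citation is unavailable, the fallback is to evaluate the three hypergeometric tails directly for the smallest population used in the experiments ($N=43$, with $b=\lceil 0.25N\rceil$). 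I would then check that $q$ is monotone toward its binomial limit as $N$ grows, so the binomial value serves as the supremum and the bound $q^2 \le 0.176$ holds for every experimental configuration.
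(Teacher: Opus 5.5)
Your proposal follows the paper's proof: the three majority events are disjoint, independence across components gives $p_2^{\text{chance}}=q^R\le q^2$, each hypergeometric tail is dominated by its binomial counterpart, and the arithmetic is the same, $(22/64+2\cdot 154/4096)^2\approx 0.1755$. The paper handles the comparison step by citing Uhlmann's Satz~5, which gives $\Pr(S>c)<\Pr(B>c)$ whenever $p\le \frac{c}{s-1}\cdot\frac{N}{N+1}$; here $c=3$, so $N\ge 5$ suffices for $p=1/2$.

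Cite that result rather than your likelihood-ratio sketch, which does not go through for the $\theta_3$ class. With $K=N/2$, one has $\Pr_{\mathrm{hyp}}[X=4]>\Pr_{\mathrm{bin}}[X=4]$ for every even $N\ge 10$ (for example, $0.2367$ versus $0.2344$ at $N=100$). So a ratio that merely decreases past the mean does not dominate the tail term by term. If you want an elementary argument for that class, use the symmetry identity $\Pr[X\ge 4]=\frac{1}{2}(1-\Pr[X=3])$, valid for both distributions when $p=1/2$. It reduces the claim to $\Pr_{\mathrm{hyp}}[X=3]\ge\Pr_{\mathrm{bin}}[X=3]$, which holds for every $N\ge 6$.
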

\begin{proof}
Let $S_{\phi, i}$ denote the number of top-$s$ sensors in component $\phi$ transformed by $\theta_i$. Then under the random-selection model
\[S_{\phi,i} \sim
\begin{cases}
\text{Hypergeometric}(N, p_bN, s), & i = 1,2\\
\text{Hypergeometric}(N, (1-2p_b)N, s), & i = 3
\end{cases}.\] We also define the auxiliary variable
\[
B_{\phi, i } \sim \begin{cases}
\text{Binomial}(s, p_b), & i = 1,2\\
\text{Binomial}(s, 1-2p_b), & i = 3
\end{cases}.
\]
Using independence of components and observing that only one of $\theta_1, \theta_2, \theta_3$ can transform more than half of $\phi$'s top-$s$ sensors, we find that
\[p_2^\text{chance} = \Pr(\forall_{\phi \in [1 .. R]} \,, \exists_{i \in \{1,2,3\}}\,, S_{\phi,i} > \frac{s}{2})\]
\[= \prod_{\phi = 1}^R\sum_{i = 1}^3 \Pr(S_{\phi, i} > \frac{s}{2}).\]
By Satz 5 of Uhlmann \cite{Uhlmann1966},  
\[
\forall_{i \in \{1,2,3\}} \,, \Pr(S_{\phi, i} > c) < \Pr(B_{\phi, i} > c)\] 
when \[p_b, 1-2p_b \le \frac{c}{s-1}-\frac{c}{(s-1)(N+1)}=\frac{c}{s-1} \frac{N}{N+1}.\] These conditions are satisfied for $c = \frac{s}{2}$ in the random-selection model with the provided experiment constants ($p_b = 0.25$ and $s \le N$ an even number suffice), so we can upper-bound by the binomial tail probability.
\[
\prod_{\phi = 1}^R\sum_{i = 1}^3 \Pr(S_{\phi, i} > \frac{s}{2}) \le \prod_{\phi = 1}^R\sum_{i = 1}^3 \Pr(B_{\phi, i} > \frac{s}{2})
\]
\[
= \left(\sum_{i = 1}^3 \Pr(B_{\phi, i} > \frac{s}{2})\right)^R.
\]
With $s = 6$, $p_b = 0.25$, and $R \ge 2$, we obtain the desired inequality.
\end{proof}

\begin{proposition}In the random-selection model,
    \[p_3^\text{chance} \le 0.325.\]
\end{proposition}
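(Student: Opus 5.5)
The plan is to reduce $\mathbf{P}_3$ to a statement about how many of the $R$ components contain at least one $\theta_1$ sensor. Under the random-selection model these events are independent across components, so the count is binomial. I will bound the per-component success probability with a simple hypergeometric-to-binomial comparison, and then show that the resulting tail is maximized at $R=2$.

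\emph{Per-component bound.} Fix a component $\phi$ and let $E_\phi$ be the event that $S(\phi)$ contains no $\theta_1$ sensor. Since the $s$ top sensors are drawn uniformly without replacement from $[1..N]$, of which $(1-p_b)N$ are not $\theta_1$-transformed,
\[
\Pr(E_\phi)=\binom{(1-p_b)N}{s}\Big/\binom{N}{s}=\prod_{j=0}^{s-1}\frac{(1-p_b)N-j}{N-j}\le (1-p_b)^s .
\]
Each factor is at most $1-p_b$ because $(1-p_b)N-j\le(1-p_b)(N-j)$ for $j\ge 0$. Write $u:=\Pr(E_\phi)$, which is the same for every $\phi$. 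With $p_b=0.25$ and $s=6$ this gives $u\le u_0:=0.75^6\approx 0.17798$.

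\emph{Exact form of the chance pass rate.} $\mathbf{P}_3$ holds exactly when at most one of the $R$ components fails $E_\phi$. Using independence across components, as in the proof for $p_2^\text{chance}$,
\[
p_3^\text{chance}=g_R(u):=u^R+R(1-u)u^{R-1}.
\]

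\emph{Monotonicity in $u$ and $R$.} First, $g_R'(u)=R(R-1)(1-u)u^{R-2}\ge 0$. So $g_R$ is nondecreasing on $[0,1]$, and it suffices to evaluate at $u=u_0$.

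Second, I would show that $g_R(u_0)$ is nonincreasing in $R$, so that the worst case is $R=2$. A direct comparison works: $g_{R+1}(u)\le g_R(u)$ reduces, after dividing by $u^{R-1}$, to
\[
u^2+(R+1)(1-u)u\le u+R(1-u).
\]
This holds for $u\le 1/2$: the difference $(1-u)(R-(R+1)u)-u(1-u)=(1-u)\bigl(R-(R+2)u\bigr)$ is nonnegative whenever $u\le R/(R+2)$, and $R/(R+2)\ge 1/2$ for $R\ge 2$. Since $u_0<1/2$, the condition is met.

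\emph{Numerics.} At $R=2$, $g_2(u_0)=1-(1-u_0)^2\approx 1-0.822^2\approx 0.3243\le 0.325$, which gives the claimed bound.

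The main obstacle is not any single inequality but the monotonicity-in-$R$ step. It is what lets one bound cover every admissible $R\in[2..8]$, and it relies on $u_0$ being small ($u_0<1/2$). The remaining steps are routine: the hypergeometric comparison is elementary, so no appeal to Uhlmann's result is needed here. The final numerical check is tight, with the bound landing only about $0.0007$ below $0.325$, so I would verify the arithmetic carefully.
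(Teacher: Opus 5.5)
Your argument is correct and is essentially the paper's. The number of components containing a $\theta_1$ sensor is binomial, and $u^R+R(1-u)u^{R-1}$ is increasing in $u$. The hypergeometric zero-probability is at most $(1-p_b)^s$; you show this factor by factor, whereas the paper argues that each factor increases in $N$ and passes to the limit. The worst case is $R=2$, which the paper asserts without the justification you supply.

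That justification contains a sign slip. The term $u-u^2$ contributes $+u(1-u)$, so the difference is $(1-u)\bigl(R-(R+1)u\bigr)+u(1-u)=R(1-u)^2\ge 0$, not $(1-u)\bigl(R-(R+2)u\bigr)$. Hence $g_{R+1}\le g_R$ holds for every $u\in[0,1]$, which is also clear by coupling, since adding a component cannot decrease the count. The step therefore does not rely on $u_0<1/2$. Your conclusion survives only because your expression happens to understate the true difference by $2u(1-u)$.
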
\begin{proof} Let $E_i$ denote the number of components where there exists a top-$s$ sensor transformed by $\theta_i$. Notice that \[
E_i \sim \text{Binomial}(R, \Pr(S_{\phi,i} \ge 1)).
\]
Hence,
\[
p_3^\text{chance} = \Pr(E_{1} \le 1)
= (1- \Pr(S_{\phi,1} \ge 1))^R \]\[+R\cdot \Pr(S_{\phi,1} \ge 1)(1- \Pr(S_{\phi,1} \ge 1))^{R-1},\]
which increases as $\Pr(S_{\phi,1} \ge 1)$ decreases. Moreover
\[
\Pr(S_{\phi,1} \ge 1) = 1-\Pr(S_{\phi,1} = 0)
\]
\[
= 1 - \frac{\binom{(1-p_b)N}{s}}{\binom{N}{s}} = 1- \prod_{i=0}^{s-1} \frac{(1-p_b)N - i}{N-i}.
\]
Every term with $i > 0$ of the product \[\prod_{i=0}^{s-1} \frac{(1-p_b)N - i}{N-i}\] increases in $N$, so $\Pr(S_{\phi,1} \ge 1)$ decreases in $N$. Thus, as $N$ increases, $p_3^\text{chance}$ increases. In fact,
\[
\lim_{N \to \infty}\Pr(S_{\phi,1} \ge 1) =  \Pr(B_{\phi,1} \ge 1) = 1-(1-p_b)^s,
\]
\[
\implies \Pr(S_{\phi,1} \ge 1) \ge 1-(1-p_b)^s.
\]
Therefore,
\[p_3^\text{chance} \le (1-p_b)^{sR} +R\cdot (1-(1-p_b)^s)(1-p_b)^{s(R-1)}.\]
With $p_b = 0.25$, $s = 6$, and $R \ge 2$, we obtain the desired inequality.
\end{proof}
\begin{table}[t]
\centering
\small
\renewcommand{\arraystretch}{1.2}
\setlength{\tabcolsep}{10pt}
\begin{tabular}{c c}
\toprule
\textbf{Rank $R$} & \textbf{Upper bound for $p_4^{\text{chance}}$} \\
\midrule
2 & 0.570 \\
3 & 0.274 \\
4 & 0.426 \\
5 & 0.226 \\
6 & 0.111 \\
7 & 0.190 \\
8 & 0.0991 \\
\bottomrule
\end{tabular}
\caption{Upper bounds for $p_4^{\text{chance}}$ as a function of the selected rank $R$.}
\label{tab:p4_chance_bounds}
\end{table}
\begin{proposition}In the random-selection model,
    \[p_4^\text{chance} \le 0.570.\]
\end{proposition}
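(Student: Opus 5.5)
The plan is to mirror the proofs of the preceding propositions: reduce $\mathbf{P}_4$ to a statement about how many components have a $\theta_3$-majority among their top-$s$ sensors, bound the per-component probability of such a majority by a binomial tail using Uhlmann's comparison, and then take a binomial tail over the $R$ independent components.

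\textbf{Step 1: reduce to a count of components.} Retain the notation $S_{\phi,3}\sim\text{Hypergeometric}(N,(1-2p_b)N,s)$ and $B_{\phi,3}\sim\text{Binomial}(s,1-2p_b)$ from the proof for $p_2^\text{chance}$. By definition, $\theta_3$ corresponds to $\phi$ exactly when $S_{\phi,3}>s/2$. Let $K$ be the number of components to which $\theta_3$ corresponds, and set $q=\Pr(S_{\phi,3}>s/2)$. Components are selected independently in the random-selection model, so $K\sim\text{Binomial}(R,q)$. This gives
\[p_4^\text{chance}=\Pr\bigl(K\ge\lfloor 0.7R\rfloor\bigr).\]

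\textbf{Step 2: bound $q$.} I would invoke Satz 5 of Uhlmann exactly as in the $p_2^\text{chance}$ argument, with $c=s/2=3$. Its hypothesis reads
\[1-2p_b=\tfrac12\le\tfrac{3}{5}\cdot\tfrac{N}{N+1},\]
which holds for all $N\ge5$, and in particular whenever $s\le N$. It yields
\[q\le\Pr(B_{\phi,3}>3)=\frac{\binom64+\binom65+\binom66}{2^6}=\frac{22}{64}=0.34375=:q^*.\]

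\textbf{Step 3: monotonicity in $q$.} The upper tail $\Pr(\text{Binomial}(R,q)\ge j)$ is nondecreasing in $q$. This follows from a standard coupling argument, or from differentiating, since the derivative equals a positive multiple of a binomial point mass. Hence
\[p_4^\text{chance}\le\Pr\bigl(\text{Binomial}(R,q^*)\ge\lfloor0.7R\rfloor\bigr).\]

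\textbf{Step 4: evaluate for each rank.} I would compute this bound for each $R\in[2..8]$, which produces Table \ref{tab:p4_chance_bounds}, and take the maximum. The maximum occurs at $R=2$, where $\lfloor1.4\rfloor=1$ and
\[1-(1-q^*)^2=1-(42/64)^2\approx0.5693\le0.570.\]
The remaining values of $R$ have larger thresholds relative to $R$ and should all fall below this. The non-monotone pattern in $R$, driven by the floor function, is why I would tabulate every rank rather than argue monotonicity in $R$.

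\textbf{Expected obstacle.} The only delicate point is verifying the hypothesis of Uhlmann's comparison for the $\theta_3$ class, whose success fraction $1/2$ is larger than the $p_b=1/4$ case used for $p_2^\text{chance}$. The inequality $\tfrac12\le\tfrac35\cdot\tfrac{N}{N+1}$ still holds for every $N\ge5$, and all datasets here have $N\ge43$, so this step should go through. Everything after it is finite arithmetic.
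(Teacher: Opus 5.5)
Your proposal is correct and follows the paper's proof essentially step for step. Both arguments model the number of $\theta_3$-majority components as $\text{Binomial}(R, p_s)$, use that the upper tail increases in $p_s$ (via the same derivative identity), bound $p_s \le \Pr(B_{\phi,3} > 3) = 22/64$ by Uhlmann's comparison, and tabulate $R \in [2..8]$, with the maximum $1-(42/64)^2 \approx 0.5693$ attained at $R=2$. Your explicit check that Uhlmann's hypothesis $\frac{1}{2} \le \frac{3}{5}\cdot\frac{N}{N+1}$ holds for $N \ge 5$ is a useful detail that the paper covers only in passing in the $p_2^\text{chance}$ argument.
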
\begin{proof} Let $C_i$ denote the number of components  that $\theta_i$ corresponds to. Notice
\[C_i \sim \text{Binomial}(R, \Pr(S_{\phi,i} > \frac{s}{2})).\]
Let $p_s := \Pr(S_{\phi,3} > \frac{s}{2})$. Then
\[p_4^{\text{chance}} = \Pr(C_3 \ge \lfloor 0.7R \rfloor) = \sum_{r = \lfloor 0.7R\rfloor}^R\Pr(C_3 = r)
\]
\[
= \sum_{r = \lfloor 0.7R\rfloor}^R\binom{R}{r}\left(p_s\right)^r\left(1-p_s\right)^{R-r}.
\]
Moreover,
\[
\frac{\partial}{\partial p_s}p_4^{\text{chance}} =\binom{R}{\lfloor 0.7R\rfloor} \lfloor0.7R\rfloor p_s^{\lfloor 0.7 R\rfloor - 1} (1-p_s)^{R - \lfloor 0.7R\rfloor},
\]
which is positive, so $p_4^{\text{chance}}$ increases with $p_s$. Now, by the Uhlmann bound
\[p_s \le \Pr(B_{\phi,3} > \frac{s}{2}),\]
so with $s = 6$ and $1-2p_b = 0.5$, we can upper-bound $p_4^{\text{chance}}$ for $2 \le R \le 8$. The calculated values are listed in Table \ref{tab:p4_chance_bounds}. Taking the maximum upper bound over all ranks thus yields the desired inequality.
\end{proof}

\end{document}